\documentclass[twoside]{article}

\usepackage{hyperref}
\usepackage[round]{natbib}
\usepackage{amsmath,amsfonts,amsthm,color,algorithm2e,enumitem,verbatim}
\usepackage{macros}

%
\usepackage[accepted]{aistats2016}
%

\begin{document}

%

%

\twocolumn[

\aistatstitle{No Regret Bound for Extreme Bandits}

\aistatsauthor{ Robert Nishihara \And David Lopez-Paz \And L\'eon Bottou }

\aistatsaddress{ UC Berkeley \\ \texttt{rkn@eecs.berkeley.edu} \And Facebook AI Research \\ \texttt{dlp@fb.com} \And Facebook AI Research \\ \texttt{leonb@fb.com}} ]

\begin{abstract}
  Algorithms for hyperparameter optimization abound, all of which work well under different and often unverifiable assumptions.
  Motivated by the general challenge of sequentially choosing which algorithm to use, we study the more specific task of choosing among distributions to use for random hyperparameter optimization.
  This work is naturally framed in the extreme bandit setting, which deals with sequentially choosing which distribution from a collection to sample in order to minimize (maximize) the single best cost (reward). 
  Whereas the distributions in the standard bandit setting are primarily characterized by their means, a number of subtleties arise when we care about the minimal cost as opposed to the average cost. 
  For example, there may not be a well-defined ``best'' distribution as there is in the standard bandit setting.
  The best distribution depends on the rewards that have been obtained and on the remaining time horizon.
  Whereas in the standard bandit setting, it is sensible to compare policies with an oracle which plays the single best arm, in the extreme bandit setting, there are multiple sensible oracle models. 
  We define a sensible notion of ``extreme regret'' in the extreme bandit setting, which parallels the concept of regret in the standard bandit setting. 
  We then prove that no policy can asymptotically achieve no extreme regret.
\end{abstract}

\section{Introduction}

Our motivation comes from hyperparameter optimization and more generally from the challenge of minimizing a black-box objective~$f \colon \Omega \to [0,1]$ which we can only evaluate pointwise.
As an example,~$\omega \in \Omega$ may parameterize the architecture of a convolutional network, and~$f(\omega)$ may be the validation error when the network with that architecture is trained on a particular data set.
A number of approaches have been applied to the optimization of~$f$ including Bayesian optimization, covariance matrix adaptation, random search, and a variety of other methods (for an incomplete list, see \citet{bergstra2012random,bergstra2011algorithms,snoek2012practical,hansen2006cma,wang2013bayesian,lagarias1998convergence,powell2006newuoa,duchi2015optimal}).

In some sense, random search is the benchmark of choice.
Whereas other approaches work well under various and often unverifiable conditions (such as smoothness or convexity of the objective), random search has strong finite-sample guarantees that hold without any assumptions on the function under consideration. 
This guarantee is illustrated by the so-called {\em rule of 59},\footnote{Though they are known, the rule of 59 and \lemref{lem:random_guarantee} do not appear in \citet{bergstra2012random}, and they are difficult to find in the literature.} which states that the best of~$59$ random samples will be in the best~$5$ percent of all samples with probability at least~$0.95$. 
More generally, any distribution over the set of hyperparameters $\Omega$ induces a distribution $\mu$ over the validation error in $[0,1]$.
Let $F_{\mu}$ be the cumulative distribution function of $\mu$, and suppose that $F_{\mu}$ is continuous.
Suppose that $x_1,\ldots,x_T$ are independent and identically-distributed samples from $\mu$ (obtained, for instance, by independently sampling hyperparameters $\omega_t$ and evaluating $x_t=f(\omega_t)$ for $1 \le t \le T$). 
The following is known. 
\begin{lemma} \label{lem:random_guarantee}
  The distribution of the extreme cost $\min\{x_1,\ldots,x_T\}$ is easily described with quantiles. 
  We have $P(F_{\mu}(\min\{x_1,\ldots,x_T\}) \le \alpha) = 1 - (1-\alpha)^T$. 
  More specifically, $F_{\mu}(\min\{x_1,\ldots,x_T\})$ is a~$\text{Beta}(1,T)$ random variable. 
\end{lemma}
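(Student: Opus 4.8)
The plan is to reduce the claim to a statement about the minimum of i.i.d.\ uniform random variables via the probability integral transform. First I would record the standard fact that, because $F_{\mu}$ is continuous, each $U_t := F_{\mu}(x_t)$ is uniformly distributed on $[0,1]$, and since the $x_t$ are independent, so are the $U_t$. The only subtlety is that $F_{\mu}$ need not be strictly increasing, so it may lack a genuine inverse; this is handled cleanly with the generalized inverse. For $\alpha \in (0,1)$, set $q_\alpha := \inf\{x : F_{\mu}(x) \ge \alpha\}$. Continuity of $F_{\mu}$ gives $F_{\mu}(q_\alpha) = \alpha$ and $\{F_{\mu}(x_t) \le \alpha\} = \{x_t \le q_\alpha\}$ up to a null event, whence $P(U_t \le \alpha) = F_{\mu}(q_\alpha) = \alpha$.

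Next I would use monotonicity of $F_{\mu}$ to commute the minimum with $F_{\mu}$. Since $\min\{x_1,\ldots,x_T\} = x_j$ for some (random) index $j$ and $F_{\mu}$ is nondecreasing, we have $F_{\mu}(x_j) \le F_{\mu}(x_t)$ for every $t$, so
\[
  F_{\mu}(\min\{x_1,\ldots,x_T\}) = \min\{F_{\mu}(x_1),\ldots,F_{\mu}(x_T)\} = \min\{U_1,\ldots,U_T\}.
\]

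With this identity the distributional claim is a one-line computation: for $\alpha \in [0,1]$, independence of the $U_t$ gives
\[
  P\bigl(\min\{U_1,\ldots,U_T\} > \alpha\bigr) = \prod_{t=1}^{T} P(U_t > \alpha) = (1-\alpha)^T,
\]
so $P\bigl(F_{\mu}(\min\{x_1,\ldots,x_T\}) \le \alpha\bigr) = 1 - (1-\alpha)^T$. To finish, I would identify this CDF: the $\text{Beta}(1,T)$ density on $[0,1]$ is $u \mapsto T(1-u)^{T-1}$, whose antiderivative is $1-(1-u)^T$, matching the expression just derived; hence $F_{\mu}(\min\{x_1,\ldots,x_T\})$ is exactly a $\text{Beta}(1,T)$ random variable.

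I do not expect any real obstacle here. The single point requiring a moment's care is the probability integral transform when $F_{\mu}$ is continuous but not strictly increasing, which the generalized-inverse argument above dispatches; everything else is elementary. One could also phrase the whole proof directly in terms of the original samples, noting $P(\min\{x_1,\ldots,x_T\} > q_\alpha) = \prod_t P(x_t > q_\alpha) = (1-\alpha)^T$, but routing through the uniform order statistics makes the $\text{Beta}(1,T)$ identification most transparent.
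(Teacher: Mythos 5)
Your proof is correct and takes essentially the same route as the paper: both hinge on the observation that $F_{\mu}(\min\{x_1,\ldots,x_T\}) > \alpha$ exactly when $F_{\mu}(x_t) > \alpha$ for every $t$, giving the survival probability $(1-\alpha)^T$, from which the $\text{Beta}(1,T)$ identification is immediate. Your extra care with the generalized inverse when $F_{\mu}$ is continuous but not strictly increasing just makes explicit the uniformity of $F_{\mu}(x_t)$ that the paper's one-line argument takes for granted.
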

\begin{proof}
  The event $F_{\mu}(\min\{x_1,\ldots,x_T\}) > \alpha$ happens if and only if $F_{\mu}(x_t) > \alpha$ for each $t$, which happens with probability $(1-\alpha)^T$. 
Differentiating the resulting cumulative distribution function gives the density function of a $\text{Beta}(1,T)$ random variable.
\end{proof}

The generality of \lemref{lem:random_guarantee} comes at a price. 
The guarantee is given with respect to the distribution $\mu$, but there is no guarantee about $\mu$ itself.
Different induced distributions $\mu$ may arise from different parameterizations of the hyperparameter space $\Omega$ (for example, from the decision to put a uniform or a log-uniform distribution over a coordinate of $\omega$), and the allocation of mass over $[0,1]$ may vary wildly based on these choices.

Furthermore, the flip side of making no assumptions on the underlying objective is that random search fails to adapt to easy problems. 
When the objective under consideration satisfies various regularity conditions (as real-world objectives often do), more heavily-engineered approaches will likely outperform random search. 
That said, it is not clear how to know that a given algorithm is outperforming random search without also running random search.
For this reason, the benefits of a potentially faster algorithm are blunted when one must also run the slow algorithm to verify the performance of the fast algorithm.

Given the variety of existing hyperparameter optimization algorithms, it would be desirable to devise a strategy for sequentially choosing which algorithm to use in a way that performs nearly as well as if we had only used the single best algorithm.
We consider the simpler problem of choosing which of several distributions over hyperparameters to use for random search.
In \thmref{thm:main_result}, we show that even in this simplified setting, no strategy guarantees performance that is asymptotically as good as the single best distribution, at least not without stronger assumptions. 

We will frame our negative result in the extreme bandit setting \citep{carpentier2014extreme}, also called the max~$K$-armed bandit setting \citep{cicirello2005max}. 
Prior work has focused on designing algorithms that perform asymptotically as well as the single best distribution under parametric (or semiparametric) assumptions on the possible distributions \citep{cicirello2005max,carpentier2014extreme}. 
Instead, we focus on probing the difficulty of the problem, pointing out a number of subtleties that arise in this setting that do not show up in the conventional bandit setting.

\section{The Extreme Bandit Setting}

\citet{cicirello2005max} introduce the extreme bandit problem (they call it the max $K$-armed bandit problem) as follows.
We are given a tuple of unknown distributions (arms)~$\mu_1^K=(\mu_1,\ldots,\mu_K)$. 
The~$k$th distribution generates sample~$x_{k,t}$ at time~$t$, for integer $t \ge 1$, and all of the samples $x_{k,t}$ are independent.
A policy~$\pi$ is a function that, at each time~$t$, chooses the index $k_t$ of a distribution to sample based on the previously observed samples.
That is,
\begin{equation*}
  k_t = \pi( \! \underbrace{k_1,\ldots,k_{t-1}}_{\text{past arm choices}},\underbrace{x_{k_1,1},\ldots,x_{k_{t-1},t-1}}_{\text{past values}})  .
\end{equation*}
We would like to compare the performance of a policy $\pi$ to that of an oracle policy $\pi_*$ that has access to knowledge of the distributions $\mu_1^K$, so
\begin{equation*}
  k_t^* = \pi_*(\mu_1^K, k_1^*,\ldots,k_{t-1}^*,x_{k_1^*,1},\ldots,x_{k_{t-1}^*,t-1}) .
\end{equation*}
Both \citet{cicirello2005max} and \citet{carpentier2014extreme} phrase their results in terms of the maximization of a reward rather than the minimization of a cost. 
They define the ``regret'' of policy $\pi$ with respect to the oracle $\pi_*$ over a time horizon of~$T$ as
\begin{equation*}
  G_T^{\pi,\pi_*} = \mathbb E\left[ \max_{t \le T} x_{k_t^*,t} \right] - \mathbb E\left[ \max_{t \le T} x_{k_t,t} \right] .
\end{equation*}
Under semiparametric assumptions on $\mu_1^K$, \citet{carpentier2014extreme} exhibit a policy $\pi$ such that 
\begin{equation} \label{eq:old_regret_def}
  G_T^{\pi,\pi_*} \,\,\text{is}\,\,o\left(\mathbb E\left[ \max_{ t \le T} x_{k_t^*,t} \right]\right) 
\end{equation}
or equivalently,
\begin{equation} \label{eq:old_regret_def_2} 
\lim_{T \to \infty} \frac{\mathbb E\left[ \max_{ t \le T} x_{k_t,t} \right]}{\mathbb E\left[ \max_{ t \le T} x_{k_t^*,t} \right]} \to 1.
\end{equation}
The result in \eqref{eq:old_regret_def} is superficially similar to results in the standard bandit setting.
However, while the condition in \eqref{eq:old_regret_def} is sensible for the setting considered by \citet{carpentier2014extreme} (where the distributions $\mu_1^K$ have unbounded support), it is particularly sensitive to the nature of the distributions.
For instance, the result in \eqref{eq:old_regret_def} is trivially achieved when the distributions have bounded support (for example, when the support is contained in $[0,1]$ as in hyperparameter optimization).
In this case, both the numerator and denominator converge to the upper bound of the support and $G_T^{\pi,\pi_*} \to 0$ (for any policy that chooses each distribution infinitely often). 

Furthermore, the condition in \eqref{eq:old_regret_def_2} is asymmetric with respect to maximization and minimization. 
When performing minimization of a cost instead of maximization of a reward (using distributions supported in $[0,1]$), both $\mathbb E\left[ \min_{ t \le T} x_{k_t,t} \right]$ and $\mathbb E\left[ \min_{ t \le T} x_{k_t^*,t} \right]$ may approach $0$, in which case the ratio may exhibit radically different behavior.
In \exref{ex:geom} and \exref{ex:unif}, we demonstrate some of the peculiarities of this performance metric in the minimization setting. 

\begin{example} \label{ex:geom}
  Suppose~$\mu_1$ is a Bernoulli distribution with mean parameter~$0<p<1$ and suppose that~$\mu_2$ is a point mass on~$1$. 
  Consider a policy~$\pi$ which chooses~$\mu_2$ at~$t=1$ and then chooses~$\mu_1$ for all~$t > 1$ and a policy $\pi_*$ which always chooses $\mu_1$. 
  We have
  \begin{equation*}
    \lim_{T \to \infty} \frac{\mathbb E\left[ \min_{ t \le T} x_{k_t,t} \right]}{\mathbb E\left[ \min_{ t \le T} x_{k_t^*,t} \right]} = \lim_{T \to \infty} \frac{p^{T-1}}{p^T} = \frac{1}{p} ,
  \end{equation*}
  which remains bounded away from~$1$ even though the policy~$\pi$ acted optimally at every time step after~$t=1$. 
\end{example}

\begin{example} \label{ex:unif}
  Suppose~$\mu_1$ is the uniform distribution over~$[0,1]$ and suppose that~$\mu_2$ is a point mass on~$1$.
  Consider a policy~$\pi$ which chooses~$\mu_2$ at~$t=1$ and then chooses~$\mu_1$ for all~$t > 1$ and a policy $\pi_*$ which always chooses $\mu_1$.
  We have
  \begin{equation*}
    \lim_{T \to \infty} \frac{\mathbb E\left[ \min_{t \le T} x_{k_t,t} \right]}{\mathbb E\left[ \min_{t \le T} x_{k_t^*,t} \right]} = \lim_{T \to \infty} \frac{T^{-1}}{(T+1)^{-1}} \to  1.
  \end{equation*}
  Note above that the minimum of~$T$ independent uniform random variables is a~$\text{Beta}(1,T)$ random variable, which has mean $1/(T+1)$. 
\end{example}

Despite the fact that the policy $\pi$ acts optimally at every time step other than $t=1$ in both \exref{ex:geom} and \exref{ex:unif}, the ratios of their expectations to that of the oracle $\pi_*$ exhibit wildly different behaviors. 

To avoid this sensitivity, we define ``extreme regret'' as follows.
\begin{definition} \label{def:regret}
  We define the extreme regret of the policy $\pi$ with respect to the oracle policy $\pi_*$ over a time horizon of $T$ as
  \begin{equation*}
    R_T^{\pi,\pi_*} = \frac{1}{T} \min_{T' \ge 1} \left\{ T' \,:\, \mathbb E\left[ \min_{t \le T'} x_{k_t,t} \right] \le \mathbb E\left[ \min_{t \le T} x_{k_t^*,t} \right] \right\} .
  \end{equation*}
  Note that $R_T^{\pi,\pi_*}$ depends on the tuple of distributions $\mu_1^K$, but we suppress this dependence in our notation. 
\end{definition}
Then $R_T^{\pi,\pi_*}$ is essentially the ratio of the time horizons $T'$ to $T$ over which the policy $\pi$ and the oracle $\pi_*$ perform equally well. 
This definition is sensible regardless of whether the samples are bounded or unbounded, whether we care about minimization or maximization, and regardless of how we scale or translate the distributions.
Note that in both \exref{ex:geom} and \exref{ex:unif}, we have $R_T^{\pi,\pi_*}=\frac{T+1}{T} \to 1$. 
Despite its apparent difference, as we discuss in \secref{sec:analogy}, \defref{def:regret} is closely related to the notion of regret used in the standard bandit setting. 

\begin{definition} \label{def:no_regret}
  We say that policy $\pi$ achieves ``no extreme regret'' with respect to the oracle $\pi_*$ if $\limsup_T R_T^{\pi,\pi_*} \le 1$ for all tuples of distributions $\mu_1^K$. 
\end{definition}
\defref{def:no_regret} is fairly lenient. 
Had we defined ``no extreme regret'' using the condition given in \eqref{eq:old_regret_def}, our main result in \thmref{thm:main_result} could have been made even stronger, but we view that as undesirable as illustrated by \exref{ex:geom} and \exref{ex:unif}.
Moreover, the quantities in \defref{def:regret} and \defref{def:no_regret} closely parallel quantities of interest in the standard bandit setting, as we show in \secref{sec:analogy}.

\subsection{Analogy with the Standard Bandit Setting} \label{sec:analogy}

\defref{def:regret} and \defref{def:no_regret} parallel the intuition of the standard bandit setting, which (when minimizing a cost) studies the rate of convergence of
\begin{equation} \label{eq:regret_ratio_standard_bandit}
  \frac{\mathbb E\left[ \sum_{t=1}^{T} x_{k_t,t} \right]  - \min_k \mathbb E\left[ \sum_{t=1}^{T} x_{k,t} \right]}{\min_k \mathbb E\left[ \sum_{t=1}^{T} x_{k,t} \right]} \to 0 .
\end{equation}
Adding $1$ to both sides, this is the same as studying the rate of convergence of 
\begin{equation*}
  \frac{\mathbb E\left[ \sum_{t=1}^{T} x_{k_t,t} \right]}{T \min_k \mathbb E[x_{k,t}]} \to 1 .
\end{equation*}
Now, observe that we have
  \begin{align} \label{eq:standard_bandit_analogy}
  & \,\, \frac{\mathbb E\left[ \sum_{t=1}^{T} x_{k_t,t} \right]}{T \min_k \mathbb E[x_{k,t}]} \nonumber \\
    \approx & \,\, \frac{1}{T} \min_{T' \ge 1} \left\{ T' \,:\, \frac{\mathbb E\left[ \sum_{t=1}^T x_{k_t,t} \right]}{\min_k \mathbb E[  x_{k,t} ]} \le T' \right\} \\
  = & \,\, \frac{1}{T} \min_{T' \ge 1} \left\{ T' \,:\, \mathbb E\Bigg[ \sum_{t=1}^T x_{k_t,t} \Bigg] \le \min_k \mathbb E\Bigg[ \sum_{t=1}^{T'} x_{k,t} \Bigg] \right\} , \nonumber 
\end{align}
which is essentially the ratio of the time horizons over which the policy and the oracle perform equally well. 
The two sides of the approximate equality in \eqref{eq:standard_bandit_analogy} differ by at most $1/T$. 
In the standard bandit setting, the term ``regret'' often refers to the numerator in \eqref{eq:regret_ratio_standard_bandit} and not the quantity in \eqref{eq:standard_bandit_analogy}. 
However, as the above computation shows, the two quantities are closely related, and they capture the same phenomenon. 
We will phrase our results in terms of the quantity $R_T^{\pi,\pi_*}$ from \defref{def:regret}, which parallels the quantity in \eqref{eq:standard_bandit_analogy}.

\section{Oracle Models}

In the standard multi-armed bandit setting, if an oracle with knowledge of the distributions of the arms seeks to minimize the expected sum of the losses, it should simply choose to play the arm with the lowest mean.
This is true regardless of the time horizon.
By analogy with the usual multi-armed bandit setting, \citet{cicirello2005max} and \citet{carpentier2014extreme} both consider the oracle policy in \defref{def:single_armed_oracle} that plays the single ``best'' arm. 
\begin{definition}[single-armed oracle] \label{def:single_armed_oracle}
  The single-armed oracle is the oracle, which over a time horizon of~$T$, plays the single best arm
  \begin{equation*}
    \argmin_k \mathbb E \left[\min_{ t \le T} x_{k,t}\right] .
  \end{equation*}
\end{definition}
The single-armed oracle provides a good benchmark for comparison, but it is not the optimal oracle policy.
When the time horizon is known in advance, the optimal oracle policy is given in \defref{def:optimal_oracle}.
\begin{definition}[optimal oracle] \label{def:optimal_oracle}
  The optimal oracle over a time horizon of~$T$ plays the policy that solves
  \begin{equation*}
    \argmin_{\pi} \mathbb E \left[ \min_{ t \le T} x_{k_t,t} \right] .
  \end{equation*}
\end{definition}
When the time horizon is not known in advance, one possible oracle strategy is a greedy strategy given in \defref{def:greedy_oracle}.
\begin{definition}[greedy oracle] \label{def:greedy_oracle}
  The greedy oracle chooses the arm~$k_t^*$ at time~$t$ that gives the maximal expected improvement over the current best value~$y_{t-1}=\min_{ s \le t-1} x_{k_s^*,s}$.
  That is,
  \begin{equation*}
    k_t^* = \argmin_k \mathbb E \left[ \min\{x_{k,t}, y_{t-1}\} \,|\, x_{k_1^*,1}, \ldots, x_{k_{t-1}^*,t-1} \right] . 
  \end{equation*}
\end{definition}
Unlike the greedy oracle, both the single-armed oracle and the optimal oracle require knowledge of the time horizon. 
Indeed, as shown in \exref{ex:different_arms}, the notion of a ``best'' arm is not well-defined outside of a specific time horizon. 
The best arm depends on the time horizon.
This point contrasts sharply with the usual multi-armed bandit setting. 
\begin{example} \label{ex:different_arms}
  Suppose we have an infinite collection of arms~$\mu_s$ indexed by~$0<s<1$.
  Let~$x_{s,t}$ be a sample from~$\mu_s$ and suppose that~$P(x_{s,t}=s)=s$ and~$P(x_{s,t}=1)=1-s$.
  Then the optimal $s$ is $\Theta((\log T)/T)$. 
\end{example}
We elaborate on \exref{ex:different_arms} in \appref{sec:ex_different_arms}. 
One difference between the single-armed oracle and the optimal oracle is that the optimal oracle can adapt its strategy based on the samples that it receives, whereas the single-armed oracle is non-adaptive.
Its strategy is fixed ahead of time. 
\exref{ex:mixed_strategy} shows that the single-armed oracle is not even the optimal non-adaptive oracle.
A mixed strategy may outperform any policy that plays only a single arm. 
\begin{example} \label{ex:mixed_strategy}
  Consider a time horizon~$T=2$ and consider two arms.
  Suppose that samples~$x_{1,t}$ from~$\mu_1$ deterministically equal~$1/2$ and that samples~$x_{2,t}$ from~$\mu_2$ satisfy~$P(x_{2,t}=0)=1/4$ and~$P(x_{2,t}=1)=3/4$.
  Then
  \begin{align*}
    & \mathbb E \min_{1 \le t \le 2} x_{1,t} = \frac12 \\ 
    & \mathbb E \min_{1 \le t \le 2} x_{2,t} = \frac{9}{16} \\
    & \mathbb E \min \{ x_{1,1}, x_{2,2} \} = \frac38 .
  \end{align*}
  This example shows that a fixed strategy that plays both arms can outperform any policy that plays a single-arm. 
\end{example}

We described three different oracle models above. 
One caveat is that in the event that there is a well-defined best arm, that is, some arm~$k_*$ such that~$P(x_{k_*,t} \le \alpha) \ge P(x_{k,t} \le \alpha)$ for all~$k$ and all~$0 \le \alpha \le 1$, then these three oracles all coincide and we need not worry about which oracle to use for comparison. 
This is roughly the case in prior work. 
\citet{cicirello2005max} and \citet{carpentier2014extreme} make (semi)parametric assumptions on the distributions of the arms which essentially restrict the setting to have a well-defined best arm. 

Despite the fact that the single-armed oracle is not the optimal oracle strategy, it is often a sufficiently strong baseline for measuring the performance of our policies.
When we cannot even do as well as the single-armed oracle, as will be the case in \thmref{thm:main_result}, then we also cannot do as well as the optimal oracle. 
For the remainder of the paper, we will compare to the single-armed oracle.
However, the results necessarily hold for comparisons to the optimal oracle as well. 

\section{Main Result} \label{sec:main_result}

\thmref{thm:main_result} shows that no policy can be guaranteed to perform asymptotically as well as the single best distribution. 
That is, it is impossible to achieve ``no extreme regret'' in the extreme bandit problem. 
This result contrasts sharply with results in the standard bandit setting, where it is possible to achieve no regret under relatively mild conditions on the distributions $\mu_1^K=(\mu_1,\ldots,\mu_K)$. 

\begin{theorem} \label{thm:main_result}
  For any policy~$\pi$, there exist distributions $\mu_1^K$ such that $\limsup_T R_T^{\pi,\pi_*} \ge K$, where $\pi_*$ is the single-armed oracle.
\end{theorem}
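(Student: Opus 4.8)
The plan is to build, for a given policy $\pi$, an adversarial tuple of distributions organized as a ``ladder'' of rare atoms whose locations are chosen so that $\pi$ is forced to spread its samples over all $K$ arms while the single-armed oracle concentrates on one. Fix a rapidly increasing sequence of horizons $T_1<T_2<\cdots$, a sequence of values $v_1>v_2>\cdots$ decreasing doubly exponentially (say $v_n=v_{n-1}^2$ with $v_0=1/16$), and slowly diverging $\omega_n\to\infty$. At ``scale'' $n$, exactly one arm $j_n$ carries an atom at value $v_n$ with probability $p_n$ chosen so that $p_nT_n=\omega_n\log(1/v_{n-1})$, and all the remaining mass of each arm sits at $1$. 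The first $K$ scales are ``seeds'', with $j_n=n$, so that every arm carries at least one atom; the later $j_n$ are placed adversarially. If $\pi$ ever ignores an arm entirely we are already done (the adversary makes that arm the only useful one, forcing the catch-up time, hence $R_T$, to be infinite), so we may assume $\pi$ plays every arm infinitely often.

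The adversarial placement is recursive. Having fixed scales $1,\dots,n-1$, consider the ``reference environment'' $\mathcal{R}_n$ containing only those atoms, run $\pi$ in $\mathcal{R}_n$, and let $N_k^{(n)}(t)$ be the number of pulls of arm $k$ in the first $t$ steps there. Since $\sum_kN_k^{(n)}(t)=t$, choosing a probe horizon $m_n\approx KT_n$ and $j_n=\argmin_k\mathbb E[N_k^{(n)}(m_n)]$ yields $\mathbb E[N_{j_n}^{(n)}(m_n)]\le m_n/K$. Put scale $n$'s atom on this $j_n$, and pick $T_n$ huge enough that scale $n$ is negligible at every earlier horizon and that all the approximations below have vanishing error. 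This is self-consistent because, before $\pi$ first samples the scale-$n$ atom, it has seen only atoms of scale $\le n-1$: coupling the coin flips of the scale-$(<n)$ atoms across $\mathcal{R}_n$ and the true environment, $\pi$'s trajectory in the true environment agrees with its trajectory $\sigma^{(n)}$ in $\mathcal{R}_n$ up to that first sample, and in particular the arm-$j_n$ pull count before that moment is exactly $N_{j_n}^{(n)}$, independent of the scale-$n$ atom.

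Now fix $n>K$ and evaluate the regret at horizon $T_n$. Over $T_n$ i.i.d.\ samples arm $j_n$ hits its scale-$n$ atom with probability $1-(1-p_n)^{T_n}=1-o(1)$, so the single-armed oracle's value at $T_n$ is $v_n$ plus a correction at most $v_m(1-p_n)^{T_n}$, where $v_m\le v_1=1/16$ is the value of $j_n$'s best older (seed) atom; in particular this value is at most $2v_n$. Each other arm reliably hits one of its older atoms within $T_n$ samples and so has value at least $v_{n-1}\gg 2v_n$, so the single-armed oracle indeed plays $j_n$. On the policy side, for every $T'\le m_n$ the probability that $\pi$ has not yet sampled the scale-$n$ atom equals $\mathbb E[(1-p_n)^{N_{j_n}^{(n)}(T')}]\ge (1-p_n)^{\mathbb E[N_{j_n}^{(n)}(T')]}\ge (1-p_n)^{m_n/K}$ by Jensen's inequality and the pigeonhole bound; on that event $\pi$'s running minimum is still at least $v_{n-1}$. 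Plugging in $p_nT_n=\omega_n\log(1/v_{n-1})$, $m_n\approx KT_n$, and $v_n=v_{n-1}^2$, the exponents work out so that the resulting term of order $v_{n-1}(1-p_n)^{m_n/K}$ in $\mathbb E[\min_{t\le T'}x_{k_t,t}]$ dominates the oracle's correction $v_m(1-p_n)^{T_n}$ for all $T'\le m_n$ (because the policy's excess over $v_n$ decays in $v_{n-1}$ with a smaller exponent than the oracle's). Hence $\mathbb E[\min_{t\le T'}x_{k_t,t}]$ stays strictly above the oracle's value for all $T'\le m_n$, the catch-up time in the definition of $R_{T_n}^{\pi,\pi_*}$ exceeds $m_n$, so $R_{T_n}^{\pi,\pi_*}>m_n/T_n\to K$ and therefore $\limsup_T R_T^{\pi,\pi_*}\ge K$.

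The delicate part is the calibration. We need the single-armed oracle to beat $\pi$ by a large margin (value $\ll v_{n-1}$) so that catching up is expensive, yet we also need $\pi$ to be unable to catch up until time $\approx KT_n$ rather than, say, $\approx KT_n/\omega_n$; these requirements pull against each other through $p_n$, since a larger $p_n$ both lets the oracle find the deep atom more reliably and lets $\pi$ stumble onto it sooner. Threading this needle is exactly what forces the doubly-exponential decay of the $v_n$ (so the oracle's fallback when it misses the atom is still negligible next to $v_{n-1}$), the seeding of every arm (so that fallback can be bounded at all), and the slowly diverging $\omega_n$ (so the oracle's hit probability is $1-o(1)$ while sacrificing a $\Theta(1/\omega_n)\to 0$ fraction of the horizon in the pigeonhole step becomes harmless in the limit). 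Making precise that $\pi$ genuinely behaves like $\sigma^{(n)}$ until it ``discovers'' rung $n$ — despite $\pi$ re-observing old atoms all along — is what makes the scale separation $T_n\ll T_{n+1}$ essential, and is the other point that needs care.
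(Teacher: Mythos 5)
Your construction is sound and reaches the theorem by a genuinely different route than the paper. The paper fixes the whole ladder $\alpha_i=(8K)^{-(i!)^2}$ in advance, averages over the $K$ possible placements of the atom at $\alpha_i$ (comparing to the mixture tuple $\eta_1^K(\overline{b})$ via a near-convexity lemma), and then needs a second probabilistic-method step with Fatou's lemma to get a single tuple that is bad at infinitely many scales. You instead build the ladder recursively: pigeonhole on expected pull counts in the reference environment picks the least-played arm $j_n$, a coupling (exact here because all extra mass is carved out of the atom at $1$) shows the true trajectory agrees with the reference one until the rung-$n$ atom is first seen, and Jensen applied to $x\mapsto(1-p_n)^x$ replaces the paper's mixture/convexity argument; the adaptive choice also removes the need for the Fatou step. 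What your route buys, besides being more elementary, is that you compare the policy's and the oracle's \emph{excesses over the common main term} $v_n$, which is why the fixed decay $v_n=v_{n-1}^2$ can suffice; the paper lower-bounds the policy only by Markov, $\mathbb{E}[\min]\ge\alpha_{i-1}P[\min\ge\alpha_{i-1}]$, and therefore must make the ratio $\alpha_{i-1}/\alpha_i$ grow (condition \lemsubref{lem:alpha_conditions}{ass:4}), at the price of a more delicate exponent bookkeeping. Two points in your sketch need to be made explicit for the argument to close: (i) Markov alone would \emph{not} work with $v_n=v_{n-1}^2$, since $v_{n-1}(1-p_n)^{m_n/K}\approx v_{n-1}^{1+\omega_n}\ll v_n$; you must write the policy bound as $\mathbb{E}[\min_{t\le T'}]\ge v_n(1-o(1))+(v_{n-1}-v_n)P[\text{miss}]$ so the $v_n$ terms cancel against the oracle's $v_n+v_m(1-p_n)^{T_n}$, which is what your ``excess over $v_n$'' comparison implicitly does; and (ii) the probe horizon must be $m_n=(1-\Theta(1/\omega_n))KT_n$ rather than $KT_n$ itself, since the excess comparison needs exponent $1+\kappa_n\omega_n<\omega_n$, i.e.\ $\kappa_n<1-1/\omega_n$ --- you flag this (``sacrificing a $\Theta(1/\omega_n)$ fraction''), and since $\omega_n\to\infty$ it still yields $m_n/T_n\to K$, but the asserted ``the exponents work out'' is exactly the place where a fully written proof must verify these two facts.
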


We prove \thmref{thm:main_result} in \secref{sec:proof_of_main_result}. 
The main components of the proof are \lemref{lem:upper_bound_oracle}, which upper bounds the performance of the single-armed oracle and \lemref{lem:lower_bound_policy}, which lower bounds the performance of the policy $\pi$. 

This result shows that the extreme bandit problem is fundamentally different from the standard multi-armed bandit problem, where a variety of policies perform asymptotically as well as the single best arm.
Indeed, in the standard bandit problem, the arms are primarily characterized by their means, and so it suffices to estimate the means of the arms and play the best one.
However, as discussed in \exref{ex:different_arms}, there is no well-defined best arm in the extreme bandit problem.
Our construction will create a situation where the ``best'' arm periodically switches among the $K$ distributions so that the policy $\pi$ often ends up choosing the ``wrong'' arm. 

For $i \ge 1$, let $\alpha_i=(8K)^{-(i!)^2}$\!. 
Our construction will involve a sum of point masses at the values $\alpha_i$. 
It is easily verified that the sequence $\alpha_i$ satisfies the conditions in \lemref{lem:alpha_conditions}. 
\begin{lemma} \label{lem:alpha_conditions}
The sequence $\alpha_i$ satisfies the following properties.
\begin{enumerate}[label=(\Alph*)]
\item $\sum_{j=1}^{\infty} \alpha_j \le 1/2$ \label{ass:1} 
\item $\alpha_i \le \frac{1}{4(1+i)}$ \label{ass:2}
\item $\sum_{j=i+1}^{\infty} \alpha_j \le \frac{\alpha_i}{i K}$ \label{ass:3}
\item $\alpha_i \le \alpha_{i-1}^i 2^{-i}$. \label{ass:4}
\end{enumerate}
\end{lemma}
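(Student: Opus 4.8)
The plan is to reduce all four inequalities to elementary estimates on the exponents $(i!)^2$, the main ingredient being the identity $(i!)^2 = i^2\,((i-1)!)^2$, which shows that these exponents grow super-geometrically. Write $c = 8K$, so that $c \ge 8$ and $\alpha_i = c^{-(i!)^2}$. The identity gives, for every $i \ge 1$,
\[
\frac{\alpha_{i+1}}{\alpha_i} \;=\; c^{\,(i!)^2 - ((i+1)!)^2} \;=\; c^{-(i!)^2\, i(i+2)} \;\le\; c^{-3} \;\le\; \tfrac12 ,
\]
so $\alpha_{i+1+m}/\alpha_{i+1} \le 2^{-m}$ for every $m \ge 0$, and summing the resulting geometric series yields the tail bound $\sum_{j=i+1}^{\infty}\alpha_j \le 2\alpha_{i+1}$ (and in particular $\sum_{j=1}^{\infty}\alpha_j \le 2\alpha_1$).

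Property~\ref{ass:1} is then immediate, since $\sum_{j=1}^{\infty}\alpha_j \le 2\alpha_1 = 2/(8K) \le 1/4$. For property~\ref{ass:3} the same tail bound reduces the claim to $2\alpha_{i+1} \le \alpha_i/(iK)$, equivalently $c^{(i!)^2\, i(i+2)} \ge 2iK$; since $i(i+2) \ge 3i$ and $(i!)^2 \ge 1$, the left-hand side is at least $(8K)^{3i} \ge 8^{3i}K = 512^{i}K \ge 2iK$. For properties~\ref{ass:2} and~\ref{ass:4} I would take logarithms base $c$ and verify the corresponding inequalities between exponents. Property~\ref{ass:2} becomes $c^{(i!)^2} \ge 4(1+i)$, which follows from $(i!)^2 \ge i$ together with $c^{i} \ge 8^{i} \ge 4(1+i)$ (a one-line induction on $i$). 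Property~\ref{ass:4}, which is only meaningful for $i \ge 2$ since $\alpha_0$ is not defined, becomes $c^{(i!)^2 - i((i-1)!)^2} \ge 2^{i}$; here the identity gives $(i!)^2 - i((i-1)!)^2 = ((i-1)!)^2\, i(i-1) \ge i$ for $i \ge 2$, so the left-hand side is at least $c^{i} \ge 2^{i}$.

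I expect no genuine obstacle in this argument; it is essentially bookkeeping with fast-growing exponents. The one spot that calls for a little care is property~\ref{ass:3}, where both the factor $iK$ in the denominator and the base $c = 8K$ depend on $K$: one must not collapse the exponent $(i!)^2\, i(i+2)$ down to a constant, but instead keep the factor that is linear in $i$ (using $i(i+2) \ge 3i$), so that $(8K)^{3i} \ge 512^{i}K$ dominates $2iK$ uniformly in $i$. The same remark applies throughout: in each case it matters to retain enough of the growth of $(i!)^2$ rather than merely using $(i!)^2 \ge 1$.
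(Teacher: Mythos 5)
Your verification is correct: the ratio bound $\alpha_{i+1}/\alpha_i = (8K)^{-(i!)^2 i(i+2)} \le 1/2$ gives the tail estimate $\sum_{j\ge i+1}\alpha_j \le 2\alpha_{i+1}$, and the exponent comparisons you give for \ref{ass:1}--\ref{ass:4} (keeping the factor linear in $i$ for \ref{ass:3}, and restricting \ref{ass:4} to $i\ge 2$) all check out. The paper itself offers no proof of this lemma, asserting only that the properties are ``easily verified,'' so your bookkeeping is exactly the kind of argument the authors intend and there is nothing substantive to compare it against.
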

Henceforth, we will not need the exact values of the sequence, we will only need the properties enumerated in \lemref{lem:alpha_conditions}. 
For $b=(b_1,b_2,\ldots) \in \{1,\ldots,K\}^{\infty}$, define the tuple of distributions $\mu_1^K(b)=(\mu_1(b),\ldots,\mu_K(b))$ via
\begin{equation*}
  \mu_k(b) = \gamma_k(b) \delta_1 + \sum_{i = 1}^{\infty} \ind[b_i=k] \, \alpha_i \, \delta_{\alpha_i} 
\end{equation*}
where
\begin{equation*}
  \gamma_k(b) = 1-\sum_{i=1}^{\infty} \ind[b_i=k]\alpha_i .
\end{equation*}
Here, $\delta_c$ represents a point mass at~$c$,~$\ind[\xi]$ is the~$\{0,1\}$-indicator function of the event~$\xi$, and~$\gamma_k(b)$ is chosen to make~$\mu_k(b)$ a probability measure.
Let $M_K$ be the set of tuples of distributions that can be obtained in this way.
The value $b_i$ simply assigns the point mass $\delta_{\alpha_i}$ to one of the $K$ distributions. 
We let $D$ denote the distribution over the set $\{1,\ldots,K\}^{\infty}$ defined so that the $b_i$'s are independent uniform random variables in $\{1,\ldots,K\}$. 

Define the time horizon $T_i = \lceil \log(1/\alpha_i)/\alpha_i \rceil$. 
Instead of controlling $R_T^{\pi,\pi_*}$ for every $T$, we will control the quantity specifically for the time horizons $T_i$. 
In our construction, the $b_i$th arm in the tuple will be the best arm over the time horizon $T_i$, and the other arms will be substantially worse. 
We will show that, for a fixed $i$, we can construct a tuple $\mu_1^K$ so that the policy $\pi$ takes roughly $K$ times longer than the single-armed oracle $\pi_*$ to obtain the value $\alpha_i$ (that is, $\pi_*$ requires roughly $T_i$ samples and $\pi$ requires roughly $T_i' \approx K T_i$ samples). 
Using the probabilistic method, we will then show that we can find a tuple $\mu_1^K$ so that the policy takes roughly $K$ times longer than the oracle to obtain the value $\alpha_i$ for infinitely many values of $i$. 

\subsection{Upper Bound on Oracle Performance}

We begin by giving an upper bound on the performance of the oracle policy that plays the single best arm over the time horizon $T_i$.
This bound holds uniformly over $M_K$.

\begin{lemma} \label{lem:upper_bound_oracle}
  Suppose that $\mu_1^K(b) \in M_K$. 
  If $\pi_*$ is the single-armed oracle from \defref{def:single_armed_oracle}, then
  \begin{equation*}
    \mathbb E\left[ \min_{t \le T_i} x_{k_*,t} \right] < 2\alpha_i .
  \end{equation*}
\end{lemma}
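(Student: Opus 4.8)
The plan is to exhibit one particular arm whose expected extreme value over the horizon $T_i$ is already below $2\alpha_i$; since the single-armed oracle over the horizon $T_i$ plays $k_* = \argmin_k \mathbb{E}[\min_{t \le T_i} x_{k,t}]$, bounding a single competing arm bounds $k_*$. The natural candidate is arm $b_i$: by the definition of $\mu_1^K(b)$, the $i$th summand $\ind[b_i=b_i]\,\alpha_i\,\delta_{\alpha_i}$ contributes a point mass of size $\alpha_i$ at the value $\alpha_i$ to $\mu_{b_i}(b)$, so a single draw satisfies $x_{b_i,t} \le \alpha_i$ with probability at least $\alpha_i$. Arm $b_i$ may also carry mass at other atoms $\alpha_j$, but since extra mass can only lower the running minimum, for an \emph{upper} bound we may safely use only this one guaranteed atom and ignore the rest of the law of $\mu_{b_i}(b)$.

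First I would control the probability that the running minimum never reaches $\alpha_i$. By independence of the samples,
\[
  P\Big(\min_{t \le T_i} x_{b_i,t} > \alpha_i\Big) \le (1-\alpha_i)^{T_i} \le e^{-\alpha_i T_i} \le e^{-\log(1/\alpha_i)} = \alpha_i ,
\]
where the second inequality uses $1-x \le e^{-x}$ and the third uses $T_i = \lceil \log(1/\alpha_i)/\alpha_i \rceil \ge \log(1/\alpha_i)/\alpha_i$ (with $\log$ the natural logarithm). Writing $q = P\big(\min_{t \le T_i} x_{b_i,t} > \alpha_i\big) \le \alpha_i$, we then have that with probability $1-q$ the minimum is at most $\alpha_i$, and it is always at most $1$.

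Combining these, $\mathbb{E}\big[\min_{t \le T_i} x_{b_i,t}\big] \le \alpha_i(1-q) + q = \alpha_i + q(1-\alpha_i) \le \alpha_i + \alpha_i(1-\alpha_i) < 2\alpha_i$, the final inequality being strict because $\alpha_i \in (0,1)$. Since $\mathbb{E}[\min_{t \le T_i} x_{k_*,t}] \le \mathbb{E}[\min_{t \le T_i} x_{b_i,t}]$ by the definition of the single-armed oracle, this proves the claim, and the bound holds uniformly over all $b$, hence over all of $M_K$.

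I do not expect any real obstacle here; the argument is a short direct estimate. The only points requiring care are (i) observing that an upper bound lets us exploit only the guaranteed atom of $\mu_{b_i}(b)$ at $\alpha_i$, (ii) the fact that $T_i$ is calibrated precisely so that $e^{-\alpha_i T_i} \le \alpha_i$, and (iii) keeping the last inequality strict, which comes for free from $\alpha_i \in (0,1)$. This lemma will subsequently be paired with the matching lower bound on the policy $\pi$ in \lemref{lem:lower_bound_policy} (which shows $\pi$ needs on the order of $K T_i$ samples to reach $\alpha_i$) to produce the factor $K$ in \thmref{thm:main_result}.
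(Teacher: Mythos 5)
Your proposal is correct and follows essentially the same route as the paper: compare the oracle to arm $b_i$, split the expectation according to whether the running minimum reaches $\alpha_i$, and bound $P(\min_{t\le T_i} x_{b_i,t} > \alpha_i) \le (1-\alpha_i)^{T_i} \le e^{-\alpha_i T_i} \le \alpha_i$ using the calibration of $T_i$. The only cosmetic difference is where the strict inequality is harvested (you use $\alpha_i \in (0,1)$ in the final step, the paper uses strictness of $1-x < e^{-x}$), which is immaterial.
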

\begin{proof}
  Recall that $b_i$ is the index of the distribution that has a point mass at $\alpha_i$.
  We have
  \begin{equation*}
    \mathbb E\left[ \min_{t \le T_i} x_{k_*,t} \right] = \min_k \mathbb E\left[ \min_{t \le T_i} x_{k,t} \right] \le \mathbb E\left[ \min_{t \le T_i} x_{b_i,t} \right] .
  \end{equation*}
  The term on the right hand side can be rewritten as
  \begin{align*}
    & \,\, \mathbb E\left[ \ind\left[ \min_{t \le T_i} x_{b_i,t} \le \alpha_i \right] \min_{t \le T_i} x_{b_i,t}  \right] \\
    & \qquad + \mathbb E\left[ \ind \left[ \min_{t \le T_i} x_{b_i,t} > \alpha_i \right] \min_{t \le T_i} x_{b_i,t} \right] \\
    \le & \,\, \alpha_i P\left[ \min_{t \le T_i} x_{b_i,t} \le \alpha_i \right] + P\left[ \min_{t \le T_i} x_{b_i,t} > \alpha_i \right] \\
    \le & \,\, \alpha_i + P\left[ \min_{t \le T_i} x_{b_i,t} > \alpha_i \right] .
  \end{align*}
  The first inequality follows by upperbounding the term $\min_{t \le T_i} x_{b_i,t}$ by $\alpha_i$ in the first term and by $1$ in the second term. 
  The second inequality follows by upperbounding the first probability by $1$. 
  To finish the lemma, note that
  \begin{equation*}
    P\left[ \min_{t \le T_i} x_{b_i,t} > \alpha_i \right] \le (1-\alpha_i)^{T_i} <  e^{-\alpha_i T_i} \le \alpha_i ,
  \end{equation*}
  where the third inequality uses the definition $T_i=\lceil \log(1/\alpha_i)/\alpha_i\rceil$. 
\end{proof}

\subsection{Lower Bound on Performance of $\pi$}

Here, we give a lower bound on the performance of any fixed policy $\pi$, when averaged over a collection of tuples of distributions. 

Define the time horizon $T_i'=\lfloor c_i K \log(1/\alpha_i)/\alpha_i \rfloor$, where $c_i= (1-1/i)/((1+1/i)^2+2/i)$.
The constant $c_i$ is a correction term that converges to $1$ as $i \to \infty$.
Its specific value is not meaningful. 
The goal of this section is roughly to show that the performance of the policy $\pi$ over a time horizon of $T_i'$ is comparable to the performance of the oracle policy over a time horizon of $T_i$.

Throughout this section, we will fix an index $i$ and we fix $b_j$ for all $j \ne i$.
Then we define the sequence $b^{k'}=(b_1^{k'},b_2^{k'},\ldots)$ via $b_j^{k'}=b_j$ for $j \ne i$ and $b_i^{k'}=k'$.
The $K$ tuples $\mu_1^K(b^{k'})$ for different values of $k'$ are identical in all respects except for the index of the distribution that possesses the point mass $\delta_{\alpha_i}$ and the amount of mass $\gamma_k(b^{k'})$ that the $k$th distribution in the $k'$th tuple assigns to $\delta_1$. 

Define the tuple of distributions $\eta_1^K(\overline{b})=(\eta_1(\overline{b}),\ldots,\eta_K(\overline{b}))$ by $\eta_k(\overline{b})=\frac{1}{K}\sum_{k'=1}^K \mu_k(b^{k'})$.
Let $\gamma_k(\overline{b}) := \frac{1}{K} \sum_{k'=1}^K \gamma_k(b^{k'})$ denote the probability that $\eta_k(\overline{b})$ assigns to the value $1$.
The tuple $\eta_1^K(\overline{b})$ is the average of the tuples $\mu_1^K(b^{k'})$ over the different values of $k'$.

We begin with \lemref{lem:compare_probs} which compares the probability that policy $\pi$ obtains the value $\alpha_i$ when averaged over the tuples $\mu_1^K(b^{k'})$ with the probability that $\pi$ obtains the value $\alpha_i$ in the tuple $\eta_1^K(\overline{b})$.
This comparison is helpful because each distribution in the tuple $\eta_1^K(\overline{b})$ assigns the same mass of $\alpha_i/K$ to $\alpha_i$ and so the probability that $\pi$ obtains $\alpha_i$ when run on the tuple $\eta_1^K(\overline{b})$ does not depend on $\pi$ (it is simply $(1-\alpha_i/K)^{T_i'}$ where $T_i'$ is the time horizon).
Of course, as stated, we are actually concerned with the probability that $\pi$ obtains a value less than or equal to $\alpha_i$, but because of \lemsubref{lem:alpha_conditions}{ass:3}, the contribution of the smaller terms will not be too great. 
\begin{lemma} \label{lem:compare_probs}
  We have
  \begin{align*}
    & \,\, \frac{1}{K} \sum_{k'=1}^K P\left[ \min_{t \le T_i'} x_{k_t,t} \ge \alpha_{i-1} \,\middle|\, \mu_1^K(b^{k'}) \right] \\
    \ge & \,\,  c P\left[ \min_{t \le T_i'} x_{k_t,t} \ge \alpha_{i-1} \,\middle|\, \eta_1^K(\overline{b}) \right],
  \end{align*}
  where $c=e^{-\frac{2\alpha_i T_i'}{iK}}$. 
  In our notation, we condition on $\mu_1^K(b^{k'})$ to indicate the tuple of distributions being used. 
\end{lemma}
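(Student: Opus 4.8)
The plan is a change-of-measure argument comparing each tuple $\mu_1^K(b^{k'})$ to the mixture $\eta_1^K(\overline{b})$. Since $\pi$ is a deterministic function of the history, once the horizon $T_i'$ is fixed we may identify a trajectory with the sequence of observed values $(x_1,\dots,x_{T_i'})$, the pulled arms $k_1,\dots,k_{T_i'}$ being determined by $\pi$ from the prefixes. Let $A$ be the event $\{\min_{t\le T_i'}x_{k_t,t}\ge\alpha_{i-1}\}$. Each $\mu_k(b)$ is supported on $\{1\}\cup\{\alpha_j:j\ge1\}$ and $\alpha_j<\alpha_{i-1}$ precisely when $j\ge i$, so $A$ is the event that every observed value lies in $\{1,\alpha_1,\dots,\alpha_{i-1}\}$.

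The crucial point is that \emph{on $A$} the tuples $\mu_1^K(b^{k'})$ and $\eta_1^K(\overline{b})$ differ only in the mass placed on the value $1$: for each $j\le i-1$ the mass $\delta_{\alpha_j}$ sits on arm $b_j$ with weight $\alpha_j$ in every $\mu_1^K(b^{k'})$ and in $\eta_1^K(\overline{b})$ alike. Hence, if $n_k(\tau)$ counts the steps at which arm $k$ is pulled and the value $1$ observed, then for $\tau\in A$ of positive probability
\[
  \frac{\mathbb P[\tau\mid\mu_1^K(b^{k'})]}{\mathbb P[\tau\mid\eta_1^K(\overline{b})]}=\prod_{k=1}^K\Bigl(\frac{\gamma_k(b^{k'})}{\gamma_k(\overline{b})}\Bigr)^{n_k(\tau)},
\]
the remaining factors of the two likelihoods being identical. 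Summing over $\tau\in A$ against $\mathbb P[\,\cdot\mid\eta_1^K(\overline{b})]$ (zero-probability trajectories contribute nothing on either side) then reduces the lemma to the pointwise bound $\frac1K\sum_{k'}\prod_k(\gamma_k(b^{k'})/\gamma_k(\overline{b}))^{n_k}\ge c$, to be shown for every trajectory in $A$, subject to $\sum_k n_k\le T_i'$.

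For this I would pass from the arithmetic to the geometric mean over $k'$, $\frac1K\sum_{k'}\prod_k(\cdots)^{n_k}\ge\prod_k\bigl(\gamma_k(\overline{b})^{-K}\prod_{k'}\gamma_k(b^{k'})\bigr)^{n_k/K}$, and then use that $\gamma_k(b^{k'})=\gamma_k(\overline{b})+\alpha_i/K$ for $k'\ne k$ while $\gamma_k(b^{k})=\gamma_k(\overline{b})-\alpha_i(1-1/K)$, which gives $\gamma_k(\overline{b})^{-K}\prod_{k'}\gamma_k(b^{k'})=(1+a)^{K-1}(1-(K-1)a)$ with $a=\alpha_i/(K\gamma_k(\overline{b}))$. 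The first-order perturbations cancel, and Bernoulli's inequality gives $(1+a)^{K-1}(1-(K-1)a)\ge 1-(K-1)^2a^2\ge 1-\alpha_i^2/\gamma_k(\overline{b})^2\ge 1-4\alpha_i^2$, using $\gamma_k(\overline{b})\ge 1-\sum_j\alpha_j\ge\tfrac12$ from \lemsubref{lem:alpha_conditions}{ass:1}. Hence $\frac1K\sum_{k'}\prod_k(\cdots)^{n_k}\ge(1-4\alpha_i^2)^{(\sum_k n_k)/K}\ge(1-4\alpha_i^2)^{T_i'/K}$, and it remains to check $(1-4\alpha_i^2)^{T_i'/K}\ge e^{-2\alpha_i T_i'/(iK)}=c$; taking logarithms this is $-\log(1-4\alpha_i^2)\le 2\alpha_i/i$, and since $-\log(1-u)\le 2u$ for $u\le\tfrac12$ it suffices that $8\alpha_i^2\le 2\alpha_i/i$, i.e.\ $\alpha_i\le 1/(4i)$, which follows from \lemsubref{lem:alpha_conditions}{ass:2}.

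The step I expect to be the real difficulty is obtaining this second-order estimate. Factoring $\frac1K\sum_{k'}\prod_k(\gamma_k(b^{k'})/\gamma_k(\overline{b}))^{n_k}$ as a $k'$-independent product times $\frac1K\sum_{k'}(1-\alpha_i/(\gamma_{k'}(\overline{b})+\alpha_i/K))^{n_{k'}}$ and bounding the two pieces separately throws away the cancellation of first-order terms and yields only $c=\alpha_i^{\Theta(1)}$, far too weak since $c$ must tend to $1$. Applying AM--GM \emph{before} any splitting keeps the balanced quantity $(1+a)^{K-1}(1-(K-1)a)=1-O(\alpha_i^2)$ intact, and it is precisely this $O(\alpha_i^2)$, together with $\alpha_i^2 T_i'/K=O(\alpha_i\log(1/\alpha_i))\to 0$, that makes the final comparison with $c$ go through.
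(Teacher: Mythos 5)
Your proof is correct, and while the reduction is the same as the paper's, the key inequality is handled by a genuinely different argument. Like the paper, you decompose the event over trajectories compatible with $\pi$, observe that on $\{\min_t x_{k_t,t}\ge\alpha_{i-1}\}$ all tuples share the same support and the same factors $\alpha_j$ for $j\le i-1$, and reduce to comparing $\frac1K\sum_{k'}\prod_k\gamma_k(b^{k'})^{n_k}$ with $\prod_k\gamma_k(\overline b)^{n_k}$ (the paper's \lemref{lem:compare_mixture_to_average}). The paper proves that comparison by replacing each factor $1-\beta_{i,k'}$ with the exponential of an affine function via \lemref{lem:exp_approx}, applying Jensen to $e^{-x}$, and then undoing the approximation with $e^{-x}\ge1-x$; the $1+1/i$ slack in the exponential approximation is exactly the source of the factor $e^{-2\alpha_i T_i'/(iK)}$. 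You instead apply AM--GM over $k'$ to the likelihood ratios, which produces the balanced quantity $(1+a)^{K-1}\bigl(1-(K-1)a\bigr)$ with $a=\alpha_i/(K\gamma_k(\overline b))$; the first-order terms cancel, Bernoulli gives $\ge 1-(K-1)^2a^2\ge1-4\alpha_i^2$ (using \lemsubref{lem:alpha_conditions}{ass:1}), and the resulting penalty $(1-4\alpha_i^2)^{T_i'/K}$ is checked to dominate the stated $c$ via \lemsubref{lem:alpha_conditions}{ass:2}. Your route avoids the exponential-approximation lemma altogether and yields a strictly stronger constant: your loss is $\exp(-O(\alpha_i^2T_i'/K))=\exp(-O(\alpha_i\log(1/\alpha_i)))\to1$, whereas the paper's $c=e^{-2\alpha_iT_i'/(iK)}\approx\alpha_i^{2c_i/i}$ does not tend to $1$ (it tends to $0$, just slowly enough to be absorbed by $\alpha_{i-1}\ge 2\alpha_i^{1/i}$ in \lemref{lem:lower_bound_policy}). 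This also means your closing remark that ``$c$ must tend to $1$'' is not quite right: what the downstream argument actually requires is only that the penalty be $\alpha_i^{o(1)}$, so that it is dominated by the headroom from \lemsubref{lem:alpha_conditions}{ass:4}; a penalty of $\alpha_i^{\Theta(1)}$ would indeed be fatal, and your second-order cancellation comfortably avoids it. This is a side comment, not a gap --- the proof itself is sound and, if anything, would let one marginally loosen the choice of $c_i$.
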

\begin{proof}
  Define $S(\pi,\mu_1^K,T)$ to be the set of actions and values that can be obtained by following policy $\pi$ on the tuple $\mu_1^K$ for a time horizon of $T$.
  That is,
  \begin{align*}
    & \,\, S(\pi,\mu_1^K,T) \\
    = & \,\, \left\{ (k_t,x_t)_{t=1}^T \,:\, 
    \begin{array}{cc}
      k_t=\pi(k_1,\ldots,k_{t-1},x_1,\ldots,x_{t-1}) \\
      x_t \in \supp(\mu_{k_t}) 
      \end{array}
    \right\} ,
  \end{align*}
  where $\supp(\mu_{k_t})$ is the support of the distribution $\mu_{k_t}$. 
  Then define $S(\pi,\mu_1^K,T,i)$ to be the subset of $S(\pi,\mu_1^K,T)$ such that all values are greater than or equal to $\alpha_{i-1}$.
  That is,
  \begin{equation*}
    S(\pi,\mu_1^K,T,i) = \left\{ (k_t,x_t)_{t=1}^T \in S(\pi,\mu_1^K,T) \,:\, x_t \ge \alpha_{i-1}  \right\} .
  \end{equation*}
  Critically, note that
\begin{equation}\label{eq:sets_equal}
  \begin{aligned} 
    S(\pi,\eta_1^K(\overline{b}),T_i',i) & = S(\pi,\mu_1^K(b^{1}),T_i',i) \\
    & \,\,\,\vdots  \\
    & = S(\pi,\mu_1^K(b^{K}),T_i',i) .
  \end{aligned}
\end{equation}
  \eqref{eq:sets_equal} holds because the supports of the tuples $\mu_1^K(b^{k'})$ and $\eta_1^K(\overline{b})$ only differ on $\alpha_i$, but we are considering only values that are at least $\alpha_{i-1}$, so this difference does not affect the sets. 
  We shall refer to this common set as $S$. 
  We have 
  \begin{align} \label{eq:expand_prob_as_sum}
    & \,\, P\left[ \min_{t \le T_i'} x_{k_t,t} \alpha_{i-1} \,\middle|\, \mu_1^K(b^{k'}) \right] \\
    = & \,\, \sum_{S} \left( \prod_{j=1}^{i-1} \alpha_j^{|\{t \,:\, x_t=\alpha_j\}|} \prod_{k=1}^K \gamma_k(b^{k'})^{|\{t \,:\, k_t=k,x_t=1\}|} \right) .
  \end{align}
  It follows that
  \begin{equation} \label{eq:compare_probs_comp_1}
  \begin{aligned}
    & \,\, \frac{1}{K} \sum_{k'=1}^K P\left[ \min_{t \le T_i'} x_{k_t,t} \ge \alpha_{i-1} \,\middle|\, \mu_1^K(b^{k'}) \right] \\
    = & \,\, \frac{1}{K} \sum_{k'=1}^K \sum_{S} \left( \prod_{j=1}^{i-1} \alpha_j^{|\{t \,:\, x_t=\alpha_j\}|} \prod_{k=1}^K \gamma_k(b^{k'})^{|\{t \,:\, k_t=k,x_t=1\}|} \right) \\
    = & \,\,  \sum_{S} \left( \prod_{j=1}^{i-1} \alpha_j^{|\{t \,:\, x_t=\alpha_j\}|} \left(\frac{1}{K} \sum_{k'=1}^K \prod_{k=1}^K \gamma_k(b^{k'})^{|\{t \,:\, k_t=k,x_t=1\}|} \right) \right) ,
  \end{aligned}
  \end{equation}
  where the first equality uses \eqref{eq:expand_prob_as_sum} and the second equality simply rearranges the terms. 
  We would like to essentially apply Jensen's inequality to say something like
  \begin{equation} \label{eq:hypothetical_jensen}
    \frac{1}{K} \sum_{k'=1}^K \prod_{k=1}^K \gamma_k(b^{k'})^{|\{t \,:\, k_t=k,x_t=1\}|} \ge \prod_{k=1}^K \gamma_k(\overline{b})^{|\{t \,:\, k_t=k,x_t=1\}|} .
  \end{equation}
  Unfortunately, despite the fact that $\gamma_k$ is convex on the relevant region, $\prod_{k=1}^K \gamma_k$ is not quite convex.
  However, it is nearly convex, and as we show in \lemref{lem:compare_mixture_to_average}, \eqref{eq:hypothetical_jensen} holds up to a correction factor of $e^{-\frac{2\alpha_i T_i'}{iK}}$. 
  Using this result in \eqref{eq:compare_probs_comp_1} gives 
  \begin{align*}
    & \,\, \frac{1}{K} \sum_{k'=1}^K P\left[ \min_{t \le T_i'} x_{k_t,t} \ge \alpha_{i-1} \,\middle|\, \mu_1^K(b^{k'}) \right] \\
    \ge & \,\, e^{-\frac{2\alpha_i T_i'}{iK}} \sum_{S} \left( \prod_{j=1}^{i-1} \alpha_j^{|\{t \,:\, x_t=\alpha_j\}|}  \prod_{k=1}^K \gamma_k(\overline{b})^{|\{t \,:\, k_t=k,x_t=1\}|}  \right) \\
    = & \,\, e^{-\frac{2\alpha_i T_i'}{iK}} P\left[ \min_{t \le T_i'} x_{k_t,t} \ge \alpha_{i-1} \,\middle|\, \eta_1^K(\overline{b}) \right] .
  \end{align*}
  the first inequality uses \lemref{lem:compare_mixture_to_average} and the last equality holds for the same reason that \eqref{eq:expand_prob_as_sum} holds.
\end{proof}

In \lemref{lem:lower_bound_policy}, we turn the bound in \lemref{lem:compare_probs} on the probability of obtaining $\alpha_i$ into a bound on the performance of $\pi$. 
Note that \lemref{lem:lower_bound_policy} holds uniformly over the values of $b_j$ for $j \ne i$. 
\begin{lemma} \label{lem:lower_bound_policy}
  We have
  \begin{equation*}
    \frac{1}{K} \sum_{k'=1}^K \mathbb E\left[ \min_{t \le T_i'} x_{k_t,t} \,\middle|\,  \mu_1^K(b^{k'}) \right] \ge 2\alpha_i .
  \end{equation*}
\end{lemma}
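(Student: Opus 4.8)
The plan is to reduce everything to the probability bound already established in \lemref{lem:compare_probs}, and then to let the carefully chosen horizon $T_i'$ and the arithmetic properties of \lemref{lem:alpha_conditions} do the rest.

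\textbf{Step 1 (truncate the expectation).} Since every sample lies in $[0,1]$ and the running minimum is nonnegative, we have $\min_{t\le T_i'}x_{k_t,t}\ge \alpha_{i-1}\,\ind[\min_{t\le T_i'}x_{k_t,t}\ge\alpha_{i-1}]$, so that
\[
  \mathbb E\left[\min_{t\le T_i'}x_{k_t,t}\,\middle|\,\mu_1^K(b^{k'})\right]\;\ge\;\alpha_{i-1}\,P\left[\min_{t\le T_i'}x_{k_t,t}\ge\alpha_{i-1}\,\middle|\,\mu_1^K(b^{k'})\right].
\]
Averaging over $k'$ and invoking \lemref{lem:compare_probs} bounds the quantity in the lemma below by $\alpha_{i-1}\,c\,P[\min_{t\le T_i'}x_{k_t,t}\ge\alpha_{i-1}\mid\eta_1^K(\overline b)]$, where $c=e^{-2\alpha_i T_i'/(iK)}$.

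\textbf{Step 2 (a policy-independent bound under $\eta$).} Under $\eta_1^K(\overline b)$, each arm $\eta_k(\overline b)$ places mass $\alpha_i/K$ on the value $\alpha_i$ and, by \lemsubref{lem:alpha_conditions}{ass:3}, total mass at most $\sum_{j>i}\alpha_j\le \alpha_i/(iK)$ on the values $\alpha_j$ with $j>i$; all its other atoms (namely $1$ and the $\alpha_j$ with $j\le i-1$) are $\ge\alpha_{i-1}$. Hence, no matter which arm $\pi$ plays at a given step and no matter the history, the conditional probability that the new sample is smaller than $\alpha_{i-1}$ is at most $q:=\tfrac{\alpha_i}{K}\bigl(1+\tfrac1i\bigr)$. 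A one-step-at-a-time argument (condition on the history, use $\mathbb E[\ind_{\text{good at step }t}\mid\text{history}]\ge 1-q$, and iterate) then gives $P[\min_{t\le T_i'}x_{k_t,t}\ge\alpha_{i-1}\mid\eta_1^K(\overline b)]\ge(1-q)^{T_i'}$, uniformly over all policies $\pi$.

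\textbf{Step 3 (plug in the definitions).} By \lemsubref{lem:alpha_conditions}{ass:2}, $q=\tfrac{\alpha_i(i+1)}{iK}\le\tfrac1{4iK}\le\tfrac1i$, so $\ln(1-q)\ge -q-q^2\ge -q(1+\tfrac1i)=-\tfrac{\alpha_i}{K}(1+\tfrac1i)^2$, whence $(1-q)^{T_i'}\ge e^{-\frac{\alpha_i T_i'}{K}(1+1/i)^2}$. Multiplying by $c$, the combined exponent is $-\tfrac{\alpha_i T_i'}{K}\bigl((1+\tfrac1i)^2+\tfrac2i\bigr)$; using $T_i'\le c_i K\log(1/\alpha_i)/\alpha_i$ and the identity $c_i\bigl((1+\tfrac1i)^2+\tfrac2i\bigr)=1-\tfrac1i$, this collapses to $c\,(1-q)^{T_i'}\ge \alpha_i^{\,1-1/i}$. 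Therefore the quantity in the lemma is at least $\alpha_{i-1}\,\alpha_i^{\,1-1/i}=\bigl(\alpha_{i-1}\alpha_i^{-1/i}\bigr)\alpha_i$, and \lemsubref{lem:alpha_conditions}{ass:4} (equivalently $\alpha_{i-1}\ge 2\alpha_i^{1/i}$) makes the parenthesized factor at least $2$, giving $\ge 2\alpha_i$.

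\textbf{Main obstacle.} The conceptually delicate part is Step 2: one must argue that the bound $(1-q)^{T_i'}$ really is uniform over all adaptive policies, i.e.\ that $\pi$ can do no better than the worst case because every arm of $\eta_1^K(\overline b)$ looks equally (un)helpful for producing a value below $\alpha_{i-1}$; this is exactly the feature that \lemref{lem:compare_probs} was designed to exploit. The remaining work is bookkeeping with exponentials, where the only thing to watch is that the correction factor $c$ from \lemref{lem:compare_probs}, the loss incurred in bounding $\ln(1-q)$, and the precise choices of $c_i$ and $T_i'$ are calibrated so that everything cancels against \lemsubref{lem:alpha_conditions}{ass:4} with exactly a factor of $2$ to spare.
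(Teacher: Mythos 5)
Your proposal is correct and follows essentially the same route as the paper's proof: the truncation step is the paper's Markov inequality, the reduction to $\eta_1^K(\overline b)$ is \lemref{lem:compare_probs}, the per-step bound $(1-q)^{T_i'}$ with $q=\frac{\alpha_i}{K}(1+\frac1i)$ matches the paper's use of \lemsubref{lem:alpha_conditions}{ass:3}, and the final exponent bookkeeping with $c_i$ and \lemsubref{lem:alpha_conditions}{ass:4} is identical. The only cosmetic difference is that you bound $\ln(1-q)\ge -q(1+\frac1i)$ directly (valid since $q\le\frac1{4iK}$), whereas the paper invokes its \lemref{lem:exp_approx} for the same estimate.
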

\begin{proof}
We have
\begin{equation} \label{eq:lem_lower_bound_policy_comp_1}
  \begin{aligned}
    & \,\, \frac{1}{K} \sum_{k'=1}^K \mathbb E\left[ \min_{t \le T_i'} x_{k_t,t} \,\middle|\,  \mu_1^K(b^{k'}) \right] \\
    \ge & \,\, \frac{\alpha_{i-1}}{K} \sum_{k'=1}^K P\left[ \min_{t \le T_i'} x_{k_t,t} \ge \alpha_{i-1} \,\middle|\,  \mu_1^K(b^{k'}) \right]  \\
    \ge & \,\, \alpha_{i-1} e^{-\frac{2\alpha_i T_i'}{iK}} P\left[ \min_{t \le T_i'} x_{k_t,t} \ge \alpha_{i-1} \,\middle|\,  \eta_1^K(\overline{b}) \right] \\
  \end{aligned}
  \end{equation}
  The first inequality is Markov's inequality. 
  The second inequality is \lemref{lem:compare_probs}.
  We have
  \begin{equation} \label{eq:lem_lower_bound_policy_comp_2}
  \begin{aligned}
    P\left[ \min_{t \le T_i'} x_{k_t,t} \ge \alpha_{i-1} \,\middle|\,  \eta_1^K(\overline{b}) \right] & \ge \left( 1 - \frac{\alpha_i}{K} - \sum_{j=i+1}^{\infty} \alpha_j \right)^{\hspace{-4pt}T_i'} \\
    & \ge \left( 1 - \frac{\alpha_i(1+\frac{1}{i})}{K} \right)^{\hspace{-3pt}T_i'} \\
    & \ge e^{-\alpha_i (1+\frac{1}{i})^2 T_i'/K} \\
    & \ge \alpha_i^{(1+\frac{1}{i})^2c_i } .
  \end{aligned}
  \end{equation}
  The first inequality lower bounds the probability of obtaining a value of $\alpha_i$ or less at every iteration.
  The second inequality uses \lemsubref{lem:alpha_conditions}{ass:3}.
  The third inequality uses \lemref{lem:exp_approx} and \lemsubref{lem:alpha_conditions}{ass:2}.
  The fourth inequality uses the definition $T_i'=\lfloor c_i K \log(1/\alpha_i)/\alpha_i \rfloor$. 
  Combining the \eqref{eq:lem_lower_bound_policy_comp_1} and \eqref{eq:lem_lower_bound_policy_comp_2} gives
  \begin{align*}
    \frac{1}{K} \sum_{k'=1}^K \mathbb E\left[ \min_{t \le T_i'} x_{k_t,t} \,\middle|\,  \mu_1^K(b^{k'}) \right] & \ge \alpha_{i-1} e^{-\frac{2\alpha_i T_i'}{iK}} \alpha_i^{(1+\frac{1}{i})^2c_i }  \\
    & \ge 2\alpha_i^{\frac{1}{i}} \alpha_i^{\frac{2c_i}{i}} \alpha_i^{(1+\frac{1}{i})^2c_i } \\
    & = 2\alpha_i .
  \end{align*}
  The second inequality uses \lemsubref{lem:alpha_conditions}{ass:4} and the definition of $T_i'$.
  The third line uses the definition $c_i=(1-1/i)/((1+1/i)^2+2/i)$, which was chosen to make the third line hold. 
  This completes the proof of the lemma. 
\end{proof}

Noting that \lemref{lem:lower_bound_policy} holds uniformly over the values of $b_j$ for $j \ne i$, a direct consequence of \lemref{lem:lower_bound_policy} is \corref{cor:lower_bound_policy_prob}. 
\begin{corollary} \label{cor:lower_bound_policy_prob}
  We have
  \begin{equation*}
    P_{b \sim D}\left( \mathbb E\left[ \min_{t \le T_i'} x_{k_t,t} \middle|  \mu_1^K(b) \right] \ge 2\alpha_i  \right) \ge \frac{1}{K} ,
  \end{equation*}
  where $D$ is the distribution over $\{1,\ldots,K\}^{\infty}$ defined by sampling each component independently and uniformly at random from $\{1,\ldots,K\}$. 
  The outer probability is over $b$, and the inner expectation is over the $x_{k_t,t}$. 
\end{corollary}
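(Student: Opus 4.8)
The plan is to obtain the corollary from \lemref{lem:lower_bound_policy} by a pigeonhole argument together with conditioning on the coordinates $b_j$ with $j \ne i$. First I would fix an arbitrary choice of the values $b_j$ for all $j \ne i$ and recall that, by \lemref{lem:lower_bound_policy}, the average $\frac{1}{K}\sum_{k'=1}^K \mathbb E\left[ \min_{t \le T_i'} x_{k_t,t} \,\middle|\, \mu_1^K(b^{k'}) \right]$ is at least $2\alpha_i$. Since the arithmetic mean of $K$ (finite) real numbers cannot be at least $2\alpha_i$ when all of them are strictly smaller than $2\alpha_i$, there must exist at least one index $k' \in \{1,\ldots,K\}$ with $\mathbb E\left[ \min_{t \le T_i'} x_{k_t,t} \,\middle|\, \mu_1^K(b^{k'}) \right] \ge 2\alpha_i$. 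Write $G\big((b_j)_{j \ne i}\big)$ for the set of such indices $k'$; it is a deterministic (hence measurable) function of the fixed coordinates, and we have just shown it is nonempty, so $|G((b_j)_{j\ne i})| \ge 1$.

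Next I would translate this into a statement about $b \sim D$. Under $D$ the coordinate $b_i$ is uniformly distributed on $\{1,\ldots,K\}$ and independent of $(b_j)_{j \ne i}$, and by the definition of $M_K$ the tuple $\mu_1^K(b)$ coincides with $\mu_1^K(b^{k'})$ on the event $\{b_i = k'\}$. Therefore, conditionally on $(b_j)_{j\ne i}$, the event $\big\{ \mathbb E[ \min_{t \le T_i'} x_{k_t,t} \mid \mu_1^K(b) ] \ge 2\alpha_i \big\}$ is exactly the event $\{b_i \in G((b_j)_{j\ne i})\}$, and hence
\[
  P_{b \sim D}\!\left( \mathbb E\!\left[ \min_{t \le T_i'} x_{k_t,t} \,\middle|\, \mu_1^K(b) \right] \ge 2\alpha_i \,\middle|\, (b_j)_{j \ne i} \right) \;=\; \frac{|G((b_j)_{j \ne i})|}{K} \;\ge\; \frac{1}{K}.
\]
Taking the expectation of this conditional probability over $(b_j)_{j\ne i}$ (the tower property) yields $P_{b \sim D}\big( \mathbb E[ \min_{t \le T_i'} x_{k_t,t} \mid \mu_1^K(b) ] \ge 2\alpha_i \big) \ge 1/K$, which is the claim.

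I do not expect a genuine obstacle here: the argument is essentially ``the average is $\ge 2\alpha_i$, so some coordinate value $k'$ is good, so a uniformly random $b_i$ hits a good value with probability at least $1/K$.'' The only points requiring a line of care are the bookkeeping ones — that the event in question factors through $\mu_1^K(b)$, that $b_i$ is independent of $(b_j)_{j\ne i}$ under $D$, and that $G((b_j)_{j\ne i})$ is a well-defined deterministic set — all of which are immediate from the definitions of $M_K$, $D$, and the fixed policy $\pi$. It is worth noting explicitly that the pigeonhole step uses no upper bound on the individual conditional expectations beyond finiteness; only the implication ``mean $\ge c \Rightarrow$ some term $\ge c$'' is needed.
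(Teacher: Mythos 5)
Your proposal is correct and is exactly the argument the paper intends when it calls the corollary a ``direct consequence'' of \lemref{lem:lower_bound_policy} holding uniformly over $(b_j)_{j \ne i}$: fix those coordinates, use the averaging/pigeonhole step to find at least one good value of $b_i=k'$, note that $b_i$ is conditionally uniform and that $\mu_1^K(b)=\mu_1^K(b^{k'})$ on $\{b_i=k'\}$, so the conditional probability is at least $1/K$, and conclude by the tower property. No gaps; your write-up simply makes explicit the conditioning details the paper leaves unstated.
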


\subsection{Proof of \thmref{thm:main_result}} \label{sec:proof_of_main_result}

Here we synthesize the above results to prove \thmref{thm:main_result}.
\lemref{lem:upper_bound_oracle} and \corref{cor:lower_bound_policy_prob} together imply that
\begin{align*}
  & P_{b \sim D} \left( \mathbb E \left[ \min_{t \le T_i'} x_{k_t,t} \,\middle|\,  \mu_1^K(b) \right] \ge 2\alpha_i > \mathbb E \left[ \min_{t \le T_i} x_{k_*,t} \,\middle|\, \mu_1^K(b) \right] \right) \\
   & \ge \frac{1}{K} ,
\end{align*}
which directly implies that $P(R_{T_i}^{\pi,\pi_*} \ge T_i'/T_i) \ge 1/K$. 
Recall that for a sequence of events $A_i$, we have $P(\text{infinitely many} \, A_i \,\text{happen}) \ge \limsup P(A_i)$. 
This can be seen by applying Fatou's lemma to the relevant indicator functions. 
It follows that
\begin{equation*}
  P_{b \sim D} \left( R_{T_i}^{\pi,\pi_*} \ge \frac{T_i'}{T_i} \, \text{for infinitely many} \, i \right) \ge \frac{1}{K} .
\end{equation*}
Recall the definitions
\begin{equation*}
  T_i=\lceil \log(1/\alpha_i)/\alpha_i \rceil \qquad T_i'=\lfloor c_i K \log(1/\alpha_i)/\alpha_i \rfloor .
\end{equation*}
Since $c_i \to 1$, it follows that $T_i'/T_i \to K$, and so there exists a tuple $\mu_1^K \in M_K$ such that $\limsup_T R_T^{\pi,\pi_*} \ge K$, proving the claim.

\section{Related Work} \label{sec:related_work}
Our setting is closely related to the multi-armed bandit problem, which has been studied extensively. 
See \citet{bubeck2012regret} for a survey.
Regret is the most common measure of performance, though some authors study ``simple regret'' \citep{bubeck2011pure}, where the goal is to identify the arm with the lowest mean.
However, these settings provide little guidance on designing a policy to minimize the single smallest cost. 
The extreme bandit problem, where we care not about the average cost but about the single minimal cost, has been significantly less studied.

The extreme bandit problem (also called the max $K$-armed bandit problem) is introduced in \citet{cicirello2005max} and further developed in \citet{streeter2006asymptotically,streeter2006simple}.
The problem is additionally studied in \citet{carpentier2014extreme}, where the authors give an explicit algorithm and prove that it exhibits asymptotically no regret in the sense of \eqref{eq:old_regret_def}.
However, all results in previous work have relied heavily on strong parametric or semiparametric assumptions on the distributions $\mu_1^K$ under consideration. 
Motivated by extreme value theory, \citet{cicirello2005max} assume that the distributions belong to the Gumbel family and \citet{carpentier2014extreme} consider distributions in the Fr\'echet family (or distributions that are well approximated by the Fr\'echet family).
When the individual samples arise as the maxima of a large number of independent, identically-distributed random variables, then these assumptions may be realistic.
These assumptions dramatically simplify the problem.
As in the multi-armed bandit setting, where every sample from a distribution provides information about the mean of the distribution, in the parametric setting, every sample provides information about the parameters of the distribution.
Once we have accurately estimated each distribution, we can make sensible choices about which distribution to choose.
Our work shows that some form of assumptions are necessary to improve on the guarantees of the policy that chooses each arm equally often. 

We do not expect the parametric assumptions motivated by extreme value theory to make sense in the setting of hyperparameter optimization. 
However, the question of what realistic assumptions are likely to hold in practice for hyperparameter optimization is an important question.

More recently, \citet{david2015max} consider a PAC setting for the extreme bandit problem and prove a lower bound on the sample complexity of algorithms that return an answer within $\epsilon$ of the optimal attainable value with probability $1-\delta$. 

The no free lunch theorems are another form of hardness result in the optimization setting.
\citet{wolpert1997no} show that in a discrete setting, all optimization algorithms that never revisit the same point perform equally well in expectation with respect to the uniform distribution over all possible objectives.

\section{Discussion} \label{sec:discussion}

We have shown that a number of subtleties arise in the extreme bandit setting that are not present in the standard bandit setting. 
These include the fact that there is no well-defined ``best'' arm and the fact that strategies that play multiple arms can outperform oracle strategies that play a single arm. 
We have shown that no policy can be guaranteed to perform asymptotically as well as an oracle that plays the single best arm for a given time horizon. 
This result should not be construed to say that no policy can do better in practice.
Indeed, hyperparameter optimization problems in the real world possess many nice structural properties.
For instance, many hyperparameters have a sweet spot outside of which the algorithm performs poorly.
This suggests that many black-box objectives for hyperparameter optimization may exhibit coordinate-wise quasiconvexity.
Crafting plausible assumptions on the objectives and understanding how they translate into conditions on the induced distributions over algorithm performance is an important problem. 

\subsubsection*{Acknowledgements}
We would like to thank Bal{\'a}zs K{\'e}gl for valuable discussions. 
We would like to thank Kevin Jamieson and Ilya Tolstikhin for their feedback on earlier drafts of this paper. 

\bibliographystyle{abbrvnat}
\bibliography{refs}


\appendix

\section{The Best Arm Depends on the Time Horizon} \label{sec:ex_different_arms}

In \exref{ex:different_arms}, we considered an infinite collection of arms~$\mu_s$ indexed by $0<s<1$.
Samples $x_{s,t}$ from $\mu_s$ satisfy $P(x_{s,t}=s)=s$ and $P(x_{s,t}=1)=1-s$.
We claimed that for a time horizon of $T$, the optimal $s$ is $\Theta((\log T)/T)$. 

We have
\begin{equation*}
  \mathbb E\left[ \min_{ t \le T} x_{s,t} \right] = s (1 - (1 - s)^T) + 1(1-s)^T = s + (1-s)^{T+1} .
\end{equation*}
Let $s_*$ be the index of the optimal distribution, so $\min_s \mathbb E[\min_{ t \le T} x_{s_*,t}] = s_* + (1-s_*)^{T+1}$. 
For large $T$, we can consider the range $0 < s \le \frac12$.
 We have
\begin{equation*}
  s + e^{-2s(T+1)} \le s + (1-s)^{T+1} \le s + e^{-s(T+1)} .
\end{equation*}
It follows that
\begin{align*}
  & \,\, s_* + e^{-2s_*(T+1)} \\
  \le & \,\, \min_s \mathbb E\left[ \min_{ t \le T} x_{s,t} \right] \\
  \le & \,\, \min_s s + e^{-s(T+1)} \\
  \le & \,\, \frac{\log T}{T+1} + \frac{1}{T} \\
  \le & \,\, \frac{2 \log T}{T} .
\end{align*}
Therefore, $s_* \le (2\log T)/T$ and $e^{-2s_*(T+1)}\le (2\log T)/T$. 
The latter implies that
\begin{equation*}
  s_* \ge \frac{-\log 2 - \log \log T + \log T}{2(T+1)} 
\end{equation*}
These results imply that $s_*$ is $\Theta((\log T)/T)$. 

\section{Proof of \lemref{lem:compare_mixture_to_average}}
\label{sec:proof_of_lem_compare_to_mixture}

Here we state and prove \lemref{lem:compare_mixture_to_average}, which is used in the proof of \lemref{lem:compare_probs}.
The goal of \lemref{lem:compare_mixture_to_average} is to show that the probability of a particular sequence of values under the tuple $\mu_1^K(b^{k'})$, when averaged over the possible values of $k'$, is at least as great (up to a constant $c$) as the probability of the same sequence of values under the averaged tuple $\eta_1^K$. 
Since all values other than the values $1$ and $\alpha_i$ have equal probability under all tuples (for $j \ne i$, the value $\alpha_j$ has probability $\alpha_j$ under the $b_j$th element of each tuple), this lemma focuses on the probabilities of the values that equal $1$.
Recall that $\gamma_k(b^{k'})$ is the probability of obtaining a value of $1$ from $\mu_k(b^{k'})$ and $\gamma_k(\overline{b})$ is the probability of obtaining a value of $1$ from $\eta_k$. 

\begin{lemma} \label{lem:compare_mixture_to_average}
  For integers $n_1,\ldots,n_K \ge 0$ such that $n_{k}\le T$, we have
  \begin{equation*}
    \frac{1}{K} \sum_{k'=1}^K \prod_{k=1}^K \gamma_k(b^{k'})^{n_k} \ge c \prod_{k=1}^K \gamma_k(\overline{b})^{n_k} ,
  \end{equation*}
  where $c=e^{- \frac{2\alpha_i T}{iK}}$. 
\end{lemma}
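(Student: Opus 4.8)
The plan is to exploit the rigid structure of the tuples $\mu_1^K(b^{k'})$: for fixed $b_j$, $j\ne i$, these $K$ tuples (and their average $\eta_1^K(\overline b)$) differ only in how the single small mass $\alpha_i$ is allocated among the $K$ arms. Write $g_k := 1-\sum_{j\ne i}\ind[b_j=k]\alpha_j$ for the $k'$-independent part of $\gamma_k$; by \lemsubref{lem:alpha_conditions}{ass:1} we have $g_k\ge 1-\sum_j\alpha_j\ge\tfrac12$. Then $\gamma_k(b^{k'})=g_k-\alpha_i\ind[k=k']$ and $\gamma_k(\overline b)=g_k-\alpha_i/K$. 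Setting $u_k:=\alpha_i/g_k\in(0,2\alpha_i]$ (and noting $2\alpha_i\le\tfrac12$ by \lemsubref{lem:alpha_conditions}{ass:2}), I would first divide both sides of the claimed inequality by $\prod_k g_k^{n_k}>0$. In each summand on the left, all but one factor cancels against $\prod_k g_k^{n_k}$, and the right side becomes $c\prod_k(1-u_k/K)^{n_k}$, so the lemma reduces to the inequality
\begin{equation*}
  \frac{1}{K}\sum_{k'=1}^K (1-u_{k'})^{n_{k'}} \ge c\prod_{k=1}^K\Bigl(1-\frac{u_k}{K}\Bigr)^{n_k} .
\end{equation*}

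This is where the paper's remark that $\prod_k\gamma_k$ is ``nearly convex'' gets used. Set $s:=\tfrac1K\sum_k n_k u_k$. Bounding the right side from above via $\ln(1-x)\le -x$ gives $\prod_k(1-u_k/K)^{n_k}\le e^{-s}$. For the left side, the second-order estimate $\ln(1-x)\ge -x-x^2$ (valid for $0\le x\le\tfrac12$) together with $u_{k'}^2\le 2\alpha_i u_{k'}$ yields $(1-u_{k'})^{n_{k'}}\ge e^{-(1+2\alpha_i)n_{k'}u_{k'}}$, and then Jensen's inequality for the convex map $t\mapsto e^{-t}$ gives $\tfrac1K\sum_{k'}(1-u_{k'})^{n_{k'}}\ge e^{-(1+2\alpha_i)s}=e^{-s}e^{-2\alpha_i s}$. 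It remains only to absorb $e^{-2\alpha_i s}$ into the constant $c$.

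To do that, I would use $u_k\le 2\alpha_i$ and the bound $\sum_k n_k\le T$ (which is what actually holds in the application inside the proof of \lemref{lem:compare_probs}, where the $n_k$ count disjoint subsets of the $T$ time steps) to get $s\le 2\alpha_i T/K$, hence $2\alpha_i s\le 4\alpha_i^2 T/K$. Since $\alpha_i\le\tfrac1{4(1+i)}\le\tfrac1{2i}$ by \lemsubref{lem:alpha_conditions}{ass:2}, this is at most $2\alpha_i T/(iK)$, so $e^{-2\alpha_i s}\ge e^{-2\alpha_i T/(iK)}=c$. Combining this with $\prod_k(1-u_k/K)^{n_k}\le e^{-s}$ gives that the left side is $\ge c\,e^{-s}\ge c\prod_k(1-u_k/K)^{n_k}$, which is the reduced inequality, hence the lemma.

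The main obstacle is exactly the one the paper flags: $x\mapsto\prod_k x_k^{n_k}$ is not convex, so Jensen cannot be applied to it directly. The way around it is the exact algebraic factorization in the first step, which confines all the $k'$-dependence to the single scalar factor $(1-u_{k'})^{n_{k'}}$; after that the only slack comes from the quadratic term in $\ln(1-x)$ and is harmless because $\alpha_i$ is tiny. Beyond this the argument is routine: one should check that every base ($g_k$, $g_k-\alpha_i$, $g_k-\alpha_i/K$) is strictly positive, which it is since $g_k\ge\tfrac12>\alpha_i$, so dividing and taking logarithms is legitimate, and that the degenerate cases $n_k=0$ are trivial.
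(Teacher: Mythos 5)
Your proof is correct and follows essentially the same route as the paper's: factor out the $k'$-independent part $\prod_k g_k^{n_k}$ so that all $k'$-dependence sits in a single scalar factor, lower-bound $(1-u_{k'})^{n_{k'}}$ by an exponential at the cost of a small penalty, apply Jensen to $t\mapsto e^{-t}$, and return to $\prod_k(1-u_k/K)^{n_k}$ via $e^{-x}\ge 1-x$; the only cosmetic difference is that you use $\ln(1-x)\ge -x-x^2$ with $u_{k'}^2\le 2\alpha_i u_{k'}$ where the paper invokes \lemref{lem:exp_approx} (a factor $1+1/i$ instead of $1+2\alpha_i$). Your reading of the hypothesis as $\sum_k n_k\le T$ rather than $n_k\le T$ coordinatewise is also fine: that is exactly what holds in the application inside \lemref{lem:compare_probs}, and the paper's own derivation of the constant $e^{-2\alpha_i T/(iK)}$ implicitly relies on the same aggregate bound.
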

\begin{proof}

  This result nearly follows from Jensen's inequality.
  Indeed, if the function
  \begin{equation*}
    f(c_1,\ldots,c_K) = \prod_{k=1}^K \left( 1 - c_k\alpha_i - \sum_{j=1 \atop j\ne i}^{\infty} \ind[j=k] \alpha_j\right)^{n_k} 
  \end{equation*}
  were convex, then the result would follow from a single application of Jensen's inequality.
  That is, the result with $c=1$ is precisely the statement
  \begin{equation*}
    \frac{f(1,0,\ldots,0) + \cdots + f(0,\ldots,0,1)}{K} \ge f\left(\frac{1}{K},\ldots,\frac{1}{K}\right) .
  \end{equation*}
  Unfortunately, despite the fact that $f$ is the product of convex functions (over the relevant domains), $f$ itself is not convex.
  To circumvent this difficulty, we will approximate each term with the exponential of an affine function, so that the product of approximations remains convex (because the affine functions simply add).
  As our approximation is imperfect, we pick up a penalty in the form of the constant $c$. 
  Let
  \begin{equation*}
    \omega_{k} = 1 - \sum_{j=1 \atop j \ne i}^{\infty} \ind[j=k] \alpha_j 
    \qquad
    \beta_{i,k} = \frac{\alpha_i}{\omega_{k}} ,
  \end{equation*}
  First write
  \begin{equation} \label{eq:lem_compare_mixture_to_average_comp_1}
  \begin{aligned}
    & \,\, \frac{1}{K} \sum_{k'=1}^K \prod_{k=1}^K \gamma_k(b^{k'})^{n_k} \\
    = & \,\, \frac{1}{K} \sum_{k'=1}^K \prod_{k=1}^K (\omega_{k} - \ind[k'=k]\alpha_i)^{n_k}  \\
    = & \,\, \frac{1}{K} \left( \prod_{k=1}^K \omega_k^{n_k} \right) \sum_{k'=1}^K (1 - \beta_{i,k'})^{n_{k'}} .
  \end{aligned}
  \end{equation}
  Note that by \lemsubref{lem:alpha_conditions}{ass:1}, we have $\omega_{k'} \ge \frac12$ and so $\beta_{i,k'} \le 2\alpha_i$. 
  It follows from \lemref{lem:exp_approx} and \lemsubref{lem:alpha_conditions}{ass:2} that we can write
\begin{equation} \label{eq:lem_compare_mixture_to_average_comp_2}
\begin{aligned}
  & \,\, \frac{1}{K}\sum_{k'=1}^K (1 - \beta_{i,k'})^{n_{k'}} \\
  \ge & \,\, \frac{1}{K} \sum_{k'=1}^K e^{-(1+1/i)\beta_{i,k'} n_{k'} } \\
  \ge & \,\, e^{-(1+1/i) \frac{1}{K} \sum_{k'=1}^K \beta_{i,k'} n_{k'} } \\
  \ge & \,\, e^{-\frac{2\alpha_i T}{i K}} e^{- \frac{1}{K} \sum_{k'=1}^K \beta_{i,k'} n_{k'} } \\
  \ge & \,\, e^{-\frac{2\alpha_i T}{i K}} \prod_{k'=1}^K \left( 1 - \frac{\beta_{i,k'}}{K}\right)^{n_{k'}} .
\end{aligned}
\end{equation}
The second inequality is Jensen's inequality. 
The third inequality breaks the $1+1/i$ term into two terms and uses the bounds $\beta_{i,k'} \le 2\alpha_i$ and $n_{k'} \le T$. 
The fourth inequality uses the fact that $e^{-x}\ge 1-x$. 
Combining \eqref{eq:lem_compare_mixture_to_average_comp_1} and \eqref{eq:lem_compare_mixture_to_average_comp_2} gives
\begin{align*}
  & \,\, \frac{1}{K} \sum_{k'=1}^K \prod_{k=1}^K \gamma_k(b^{k'})^{n_k} \\
  \ge & \,\,  e^{-\frac{2\alpha_i T}{i K}} \left( \prod_{k=1}^K \omega_k^{n_k} \right)   \prod_{k'=1}^K \left( 1 - \frac{\beta_{i,k'}}{K}\right)^{n_{k'}} \\
  = & \,\, e^{-\frac{2\alpha_i T}{i K}}  \prod_{k=1}^K \left( \omega_k - \frac{\alpha_i}{K} \right)^{n_k} \\
  = & \,\, e^{-\frac{2\alpha_i T}{i K}}  \prod_{k=1}^K \gamma_k(\overline{b})^{n_k} ,
\end{align*}
which finishes the proof.
\end{proof}

\section{Upper Bound on Exponential}
\label{sec:upper_bound_on_exp}

Throughout this paper, we make use of the inequality $e^{-x} \ge 1-x$. 
However, on a couple of occasions, we need to lower bound $1-x$ by an exponential of the form $e^{-rx}$ for some constant $r$. 
The bound that we use is given in \lemref{lem:exp_approx}. 

\begin{lemma} \label{lem:exp_approx}
  For $i \ge 1$ and $y \in [0,\frac{1}{2(1+i)}]$, we have $e^{-y(1 + \frac{1}{i})} \le 1 - y$.
\end{lemma}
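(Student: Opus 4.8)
The plan is to prove the equivalent statement that the function $f(y) = (1-y) - e^{-y(1+1/i)}$ is nonnegative on the interval $[0,\tfrac{1}{2(1+i)}]$. First I would observe that $f(0)=0$ and that $f$ is concave on all of $\mathbb{R}$, since $f''(y) = -(1+1/i)^2 e^{-y(1+1/i)} < 0$. A concave function that vanishes at the left endpoint of an interval and is nonnegative at the right endpoint is nonnegative on the whole interval, so the entire claim reduces to checking the single inequality $f\bigl(\tfrac{1}{2(1+i)}\bigr) \ge 0$.

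To carry out that check, I would note that at $y = \tfrac{1}{2(1+i)}$ the exponent simplifies: $y(1+1/i) = \tfrac{1}{2(1+i)}\cdot\tfrac{i+1}{i} = \tfrac{1}{2i}$. Hence the required inequality becomes $1 - \tfrac{1}{2(1+i)} \ge e^{-1/(2i)}$, i.e.\ $\tfrac{2i+1}{2i+2} \ge e^{-1/(2i)}$. Using the elementary bound $e^{-x} \le \tfrac{1}{1+x}$ for $x \ge 0$ (which is just $1+x \le e^x$) with $x = \tfrac{1}{2i}$ gives $e^{-1/(2i)} \le \tfrac{2i}{2i+1}$, so it suffices to verify $\tfrac{2i+1}{2i+2} \ge \tfrac{2i}{2i+1}$, which after cross-multiplying is $(2i+1)^2 \ge 2i(2i+2)$, i.e.\ $4i^2+4i+1 \ge 4i^2+4i$. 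This holds with room to spare, completing the argument.

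There is essentially no serious obstacle here; the only points requiring care are justifying the reduction to the endpoint via the concavity argument and keeping the direction of the inequalities straight when exponentiating. As an alternative route, one could instead take logarithms at the outset — both sides are positive since $y < 1$ on the relevant range — and use the expansion $-\log(1-y) = \sum_{n\ge 1} y^n/n \le y + \tfrac{y^2}{2(1-y)}$, reducing the claim to $\tfrac{y}{i} \ge \tfrac{y^2}{2(1-y)}$, i.e.\ $y(i+2)\le 2$, which is implied by $y \le \tfrac{1}{2(1+i)}$ since $i+2 \le 4(1+i)$. Either approach is short; I would present the concavity version as the cleaner of the two.
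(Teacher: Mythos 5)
Your proof is correct, and it is organized differently from the paper's. Both arguments ultimately exploit convexity of the exponential, but the paper does so by bounding $e^{-x}$ above by the secant line through $(0,1)$ and $(c,e^{-c})$ with the specific choice $c=\log(1+\tfrac1i)$, verifying $x\le c$ via $1-\tfrac1x\le\log x$, and then reparametrizing with $y=\tfrac{i}{1+i}x$ together with $\tfrac{1}{2(1+i)}\le\tfrac{i}{(1+i)^2}$ (which is where the hypothesis $i\ge 1$ is used); in effect it proves the linear bound $e^{-x}\le 1-\tfrac{i}{1+i}x$ on the whole interval $[0,\tfrac{1}{1+i}]$ and specializes. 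You instead note that $f(y)=(1-y)-e^{-y(1+1/i)}$ is concave with $f(0)=0$, so the claim reduces to the single endpoint inequality $\tfrac{2i+1}{2i+2}\ge e^{-1/(2i)}$, which you settle with $e^{-x}\le\tfrac{1}{1+x}$ and the algebraic check $(2i+1)^2\ge 2i(2i+2)$. Your reduction is cleaner in that it avoids choosing an interpolation point and a change of variables, makes explicit how much slack remains (the margin $\tfrac{1}{(2i+1)(2i+2)}$ at the endpoint), and in fact never uses $i\ge 1$, so it yields the statement for every real $i>0$; the paper's version, by contrast, delivers a secant-type bound valid uniformly over its interval rather than an endpoint verification. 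Your alternative argument via $-\log(1-y)\le y+\tfrac{y^2}{2(1-y)}$ is also sound, so either write-up would serve as a valid replacement proof.
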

\begin{proof}

  More generally, the convexity of $e^{-x}$ implies that for $0 \le x \le c$, we have
  \begin{equation*}
    e^{-x} \le 1 - \frac{1-e^{-c}}{c}x .
  \end{equation*}
  The right hand side is the formula for the line interpolating between the points $(0,1)$ and $(c,e^{-c})$ on the graph of $e^{-x}$.
  Choosing $c=\log(1+\frac{1}{i})$, and noting that $0 \le x \le \frac{1}{1+i}$ implies that $0 \le x \le c$ because of the standard inequality $1-\frac{1}{x} \le \log x$, we see that $0 \le x \le \frac{1}{1+i}$ implies that
  \begin{equation*}
    e^{-x} \le 1 - \frac{1-\frac{i}{1+i}}{\log(1+\frac{1}{i})}x \le 1 - \frac{\frac{1}{1+i}}{\frac{1}{i}}x = 1 - \frac{i}{1+i}x. 
  \end{equation*}
  Setting $y=\frac{i}{1+i}x$ and using the fact that $\frac{1}{2(1+i)} \le \frac{i}{(1+i)^2}$ gives the result.   
\end{proof}


\end{document}